\newtheorem{theorem}{Theorem}
\newtheorem{proposition}{Proposition}
\newtheorem{proof}{Proof}
\def\BibTeX{{\rm B\kern-.05em{\sc i\kern-.025em b}\kern-.08em
    T\kern-.1667em\lower.7ex\hbox{E}\kern-.125emX}}
\begin{document}

\title{Post-Processing Mask-Based Table Segmentation for Structural Coordinate Extraction
}

\author{
\IEEEauthorblockN{Suren Bandara}
\IEEEauthorblockA{
\textit{University of Moratuwa, Sri Lanka} \\
surenbandara7@gmail.com
}
}

\maketitle

\begin{abstract}
Structured data extraction from tables plays a crucial role in document image analysis for scanned documents and digital archives. Although many methods have been proposed to detect table structures and extract cell contents, accurately identifying table segment boundaries (rows and columns) remains challenging, particularly in low-resolution or noisy images. In many real-world scenarios, table data are incomplete or degraded, limiting the adaptability of transformer-based methods to noisy inputs. Mask-based edge detection techniques have shown greater robustness under such conditions, as their sensitivity can be adjusted through threshold tuning; however, existing approaches typically apply masks directly to images, leading to noise sensitivity, resolution loss, or high computational cost. This paper proposes a novel multi-scale signal-processing method for detecting table edges from table masks. Row and column transitions are modeled as one-dimensional signals and processed using Gaussian convolution with progressively increasing variances, followed by statistical thresholding to suppress noise while preserving stable structural edges. Detected signal peaks are mapped back to image coordinates to obtain accurate segment boundaries. Experimental results show that applying the proposed approach to column edge detection improves Cell-Aware Segmentation Accuracy (CASA) a layout-aware metric evaluating both textual correctness and correct cell placement from 67\% to 76\% on the PubLayNet-1M benchmark when using TableNet~\cite{paliwal2019tablenet} with PyTesseract OCR~\cite{saoji2021text}. The method is robust to resolution variations through zero-padding and scaling strategies and produces optimized structured tabular outputs suitable for downstream analysis.
\end{abstract}
\begin{IEEEkeywords}
Table Structure Extraction, Mask-Based Segmentation, Column and Row Detection, Iterative Threshold–Convolution, Document Image Analysis, OCR Enhancement
\end{IEEEkeywords}

\section{Introduction}
Tables are a fundamental structure in scientific, financial, and administrative documents, providing structured representations of information that are critical for downstream processing and analysis. Automatic extraction of table structures enables efficient digitization and reuse of data from scanned documents and digital archives. However, accurately identifying table segment boundaries specifically, rows and columns remains challenging in real-world scenarios due to noise, low resolution, and degraded document quality.

Recent research in table structure recognition has predominantly focused on deep learning–based approaches, including convolutional and transformer-based models such as TableNet ~\cite{paliwal2019tablenet} and CascadeTabNet~\cite{prasad2020cascadetabnet}. These methods are effective at producing table masks that indicate the presence of rows and columns and are widely used as a first stage in table extraction pipelines. However, their outputs are often imperfect, particularly for noisy or low-resolution inputs, resulting in fragmented or misaligned masks. Moreover, these models are not easily adjustable once trained, limiting their adaptability to varying noise conditions encountered in practical applications.

Mask-based representations provide an interpretable intermediate form that allows post-processing without retraining models. Existing segmentation methods, however, typically apply masks directly to images to suppress non-table regions, which can amplify noise, degrade resolution, and lead to inaccurate boundary localization. More importantly, such approaches often prioritize structural separation without explicitly considering information preservation within table cells.

In this work, we argue that preserving textual information together with its correct structural context is essential for downstream processing. When table content is extracted without accurate row and column alignment, subsequent processing modules such as large language models (LLMs) or information extraction systems receive disordered or context-free text, leading to suboptimal or incorrect results. Therefore, the primary objective of this study is to balance structure preservation with content fidelity, ensuring that extracted words remain associated with their correct table cells.

To achieve this, we propose a lightweight, training-free signal-processing framework that operates on table masks generated by existing models such as TableNet ~\cite{paliwal2019tablenet} and CascadeTabNet~\cite{prasad2020cascadetabnet}. Instead of directly modifying image intensities, we model row and column transitions in the masks as one-dimensional signals. These signals are processed using multi-scale Gaussian convolution with progressively increasing variances, followed by statistical thresholding to suppress noise while emphasizing stable structural edges. The detected signal peaks are mapped back to the original image coordinates to obtain accurate segment boundaries.

To evaluate both structural correctness and information preservation, we introduce Cell-Aware Segmentation Accuracy (CASA), a layout-aware metric that considers a word to be correct only if it is both accurately recognized and placed within the correct table cell. Experimental results on the PubLayNet-1M benchmark demonstrate that applying the proposed method to column edge detection improves CASA from 67\% to 76\% when using TableNet-generated masks and PyTesseract OCR~\cite{saoji2021text}. These results indicate that the proposed approach enhances structural alignment without sacrificing textual content, producing optimized structured tabular outputs suitable for downstream analysis.
\section{Related Work}

Recent advances in table structure recognition have increasingly relied on mask-based deep learning architectures to localize and segment tabular components from document images. These approaches typically learn pixel-wise representations of tables, rows, columns, or cells, which are then used to guide downstream structure reconstruction and content extraction.

One of the early and influential works in this direction is TableNet~\cite{paliwal2019tablenet}, which introduced an end-to-end multi-task framework for jointly performing table detection and column segmentation. The model employs a shared VGG-19 encoder with two task-specific decoder branches, producing a table mask and a column mask aligned with the input image. These masks are primarily used to highlight column regions, after which heuristic rules are applied to infer row boundaries. Optical character recognition (OCR) is then performed within the masked regions to recover tabular content. By sharing representations between detection and structure-related tasks, TableNet~\cite{paliwal2019tablenet} demonstrated improved performance over single-task baselines.

Building on this idea, CascadeTabNet~\cite{prasad2020cascadetabnet} reformulated table detection and structure recognition as an instance segmentation problem. The approach leverages a Cascade Mask R-CNN architecture with an HRNet backbone to simultaneously predict table regions and individual table cell masks. Additionally, tables are classified as bordered or borderless, with different post-processing strategies applied accordingly. For borderless tables, cell-level masks are used to infer structure, whereas bordered tables rely on traditional line detection and text alignment techniques. Through iterative transfer learning and extensive data augmentation, CascadeTabNet~\cite{prasad2020cascadetabnet} achieves strong generalization across diverse datasets and table layouts.

A related line of work utilizes VGG-19-based convolutional architectures for table extraction and structure recognition~\cite{iqbal2025table}. These methods generate row and column masks to suppress non-tabular regions in the document image. The masked images are subsequently passed to OCR engines, and heuristic rules are used to organize the recognized text into a structured table format. By combining deep visual features with semantic cues, such approaches improve robustness in complex and noisy document layouts.

Despite their effectiveness, these mask-based methods share a fundamental limitation. In most cases, the predicted segmentation masks are treated primarily as visual filters, rather than as structured signals from which explicit geometric coordinates of rows and columns are directly derived. The conversion of column or cell masks into precise structural boundaries is often heuristic, implicit, or dataset-specific, and is not explicitly optimized during training. Moreover, applying segmentation masks directly to the input image can suppress fine-grained visual details near structural boundaries, leading to the loss of small but critical elements such as numeric characters or punctuation marks. This issue becomes particularly pronounced in densely populated or low-resolution tables, where OCR accuracy can degrade significantly (see Figure~\ref{fig:noisy-mask}).
\begin{figure}[h]
    \centering
    \includegraphics[width=0.8\linewidth]{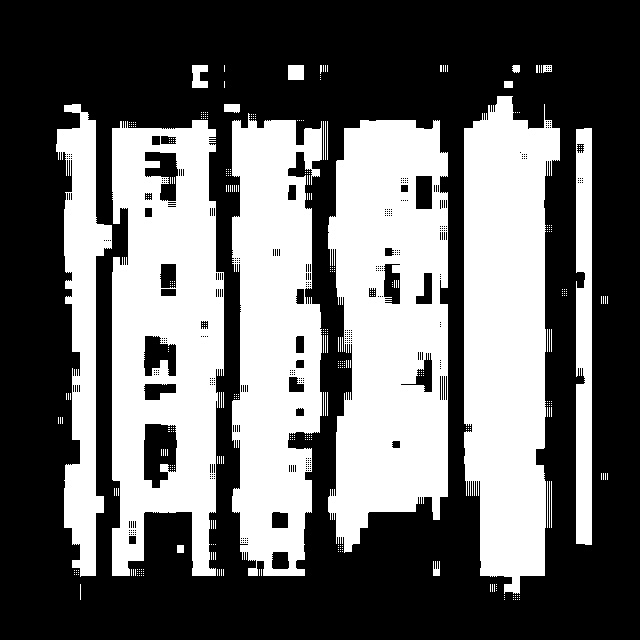}
    \caption{Example of a noisy column mask output illustrating the challenge of using mask-based segmentation directly for structural coordinate extraction.}
    \label{fig:noisy-mask}
\end{figure}

In contrast to these approaches, our method focuses on extracting structural information directly from mask distributions using a multi-scale convolutional filtering strategy, without degrading the original image content. This design provides a principled and computationally efficient middle ground between heuristic-based classical methods and resource-intensive end-to-end deep learning solutions.

\section{Methodology}

\subsection{Overview}

We propose a post-processing method that derives explicit structural coordinates directly from segmentation mask distributions, without modifying or masking the original document image. Instead of treating the mask as a visual filter, the proposed approach interprets it as a spatial signal encoding structural transitions.

The method aggregates pixel-wise transitions along one axis of the mask to identify stable boundary locations along the orthogonal axis. By accumulating and smoothing these transitions across multiple scales, consistent row or column boundaries emerge even when the predicted mask is noisy, incomplete, or fragmented. This enables reliable localization of structural regions while preserving all visual details in the original image for subsequent OCR and content extraction.

\subsection{Derivation of Raw Midpoint Accumulation}

Consider a binary mask $M \in \{0,1\}^{H \times W}$ generated by a segmentation model.  
To extract one-dimensional structural evidence along the $x$-direction, we aggregate information by scanning along the orthogonal direction $y$ (e.g., scanning vertically to localize horizontal separators, or horizontally to localize vertical separators).

For each fixed $y \in \{1,\dots,H\}$, define the one-dimensional binary signal
\begin{equation}
m_y(x) = M(y,x), \quad x \in \{1,\dots,W\}.
\end{equation}

We identify transition indices where the mask value changes:
\begin{equation}
\Gamma_y = \{x \in \{1,\dots,W-1\} \mid m_y(x) \neq m_y(x+1)\}.
\end{equation}

Each pair of consecutive transition points $(x_{y,i}, x_{y,i+1}) \in \Gamma_y$ defines an interval along the $x$-axis.  
For each such interval, we compute the midpoint location
\begin{equation}
\tilde{x}_{y,i} = \frac{x_{y,i} + x_{y,i+1}}{2}.
\end{equation}

Collecting midpoint samples across all $y$ yields a multiset
\begin{equation}
\mathcal{X} = \left\{ \tilde{x}_{y,i} \;\middle|\; y = 1,\dots,H,\; i = 1,\dots,|\Gamma_y|-1 \right\}.
\end{equation}

From the extracted midpoint samples, we construct a non-normalized accumulation function by aggregating their occurrences along the $x$-axis.

Let $\mathcal{X} = \{\tilde{x}_{y,i}\}$ denote the set of midpoint locations obtained across all scan positions. The raw accumulation function $g(x)$ is defined as
\begin{equation}
g(x) = \sum_{y=1}^{H} \sum_{i} \mathbf{1}\!\left(x = \tilde{x}_{y,i}\right),
\end{equation}
where $\mathbf{1}(\cdot)$ denotes the indicator function.

In discrete implementations, $g(x)$ corresponds to a histogram over the $x$-axis, whose magnitude reflects the number of scan lines supporting a midpoint at location $x$. This function is not normalized and therefore does not represent a probability density.

The raw discrete accumulation $g[x]$ can be normalized to obtain an empirical density:

\begin{equation}
f_0[x] = \frac{g[x]}{\sum_{x} g[x]},
\end{equation}

which satisfies
\begin{equation}
f_0[x] \ge 0, \quad \sum_x f_0[x] = 1.
\end{equation}

To construct a continuous representation $f_0(x)$ from the discrete bins, one can perform linear interpolation or kernel smoothing:

\begin{equation}
f_0(x) \approx \sum_{x_i} f_0[x_i] \, K_\epsilon(x - x_i),
\end{equation}

where $K_\epsilon$ is a smoothing kernel (e.g., Gaussian) with small width $\epsilon > 0$.  
This yields a continuous function $f_0(x)$ suitable for subsequent convolution and threshold–convolution steps.

\subsection{Problem Setting and Notation}

Due to measurement noise, spectral leakage, and finite sampling effects, $f_0(x)$ is assumed to be a distorted realization of an underlying latent distribution $p(x)$
\begin{equation}
p(x) = \sum_{k=1}^{K} \pi_k \, \delta(x - \mu_k)
\end{equation}
where $\pi_k \ge 0$, $\sum_k \pi_k = 1$

which is well approximated by a finite Gaussian mixture in real world scenarios:
\begin{equation}
f_0(x) = \sum_{k=1}^K \pi_k \, \mathcal{N}(x \mid \mu_k, \sigma_k^2) + \varepsilon(x),
\end{equation}

where \( \varepsilon(x) \) denotes additive noise.

The objective of the proposed method is to recover a regularized estimate of the density 
\[
f_0(x) \approx \sum_{k=1}^K \pi_k \, \mathcal{N}(x \mid \mu_k, \sigma_k^2),
\] 
which allows the identification of the dominant spectral modes (peaks) while suppressing noise-induced artifacts represented by $\varepsilon(x)$.  

By regularizing the observed density, we aim to preserve the main structure of the signal (the Gaussian components) and remove spurious fluctuations caused by additive noise.

\subsection{Variational Objective}

We define a latent energy functional:

\begin{equation}
\mathcal{E}[f] = D_{\mathrm{KL}}(f \,\|\, f_0) + \lambda \int f(x) \log f(x)\,dx + \mu \int \Phi_\theta(f(x))\,dx,
\end{equation}

where:
\begin{itemize}
    \item \(D_{\mathrm{KL}}(f \,\|\, f_0) = \int f(x) \log \frac{f(x)}{f_0(x)} dx \) is the data fidelity term,
    \item \( \int f \log f \, dx \) is an entropy-based smoothness regularization,
    \item \( \Phi_\theta(f) = \mathbf{1}[f < \theta] \) penalizes low-amplitude noise, and
    \item \(\lambda, \mu > 0\) control the balance between smoothing and noise suppression.
\end{itemize}

The optimal distribution minimizes this energy:

\begin{equation}
f^* = \arg\min_f \mathcal{E}[f].
\end{equation}

\subsection{Proposed Iterative Threshold–Convolution Scheme}

We approximate the solution using alternating minimization, which corresponds to our practical algorithm:

\begin{enumerate}
    \item \textbf{Thresholding step (noise suppression)}:
    \begin{equation}
    f^{(n+\frac12)} = \arg\min_f \, \mu \int \Phi_\theta(f) + \frac{1}{2} \| f - f^{(n)} \|_2^2,
    \end{equation}
    implemented as:
    \[
    f^{(n+\frac12)}(x) =
    \begin{cases}
    f^{(n)}(x), & f^{(n)}(x) \ge \theta \\
    0, & f^{(n)}(x) < \theta
    \end{cases}
    \]

    Given a threshold $\theta > 0$, define the nonlinear truncation operator
    \begin{equation}
    (\mathcal{T}_\theta f)(x)
    =
    f(x)\,\mathbf{1}\{f(x) \ge \theta\}.
    \end{equation}

    This operation suppresses low-amplitude frequency components typically associated with noise floor effects.

    \item \textbf{Gaussian convolution step (smoothness)}:
    \begin{equation}
    f^{(n+1)} = \arg\min_f \, \lambda \int f \log f + \frac{1}{2} \| f - f^{(n+\frac12)} \|_2^2,
    \end{equation}
    whose solution is convolution with a zero-mean Gaussian kernel \( G_{\sigma}(x) \):
    \[
    f^{(n+1)} = f^{(n+\frac12)} * G_\sigma.
    \]
    Here
    \begin{equation}
    G_\sigma(x) = \frac{1}{\sqrt{2\pi\sigma^2}}
    \exp\left(-\frac{x^2}{2\sigma^2}\right)
    \end{equation}
\end{enumerate}

The proposed algorithm alternates thresholding and convolution:
\begin{equation}
\boxed{
f_{n+1}
=
\mathcal{G}_\sigma \big( \mathcal{T}_\theta(f_n) \big),
\quad n \ge 0.
}
\end{equation}

Each iterate is renormalized to ensure unit mass.
In the process we should be able to reduce the $D_{\mathrm{KL}}(f \,\|\, f_0)$ between the estimated distribution and the latent distribution.
This can be done by controlling iterations. 
 After applying the iterative threshold–convolution process, the resulting distribution exhibits well-separated bell-shaped peaks. The locations of these peaks correspond to the underlying structural coordinates (e.g., approximate column or row positions) in the input data.
 The pseudo-code of the method is summarized in Algorithm ~\ref{alg:threshold_convolution}.

\begin{algorithm}[t!]
\caption{Iterative Threshold--Convolution Regularization from Mask}
\label{alg:threshold_convolution}
\begin{algorithmic}[1]
\State \textbf{Input:} Binary mask $M \in \{0,1\}^{H \times W}$

\State \textbf{Midpoint extraction:}
\For{$y = 1$ to $H$}
    \For{each contiguous interval $i$ in row $y$ where $M(y,\cdot)$ changes value}
        \State Detect index pair $(x^{(l)}_{y,i}, x^{(r)}_{y,i})$ marking the interval
        \State Compute midpoint:
        \[
        \tilde{x}_{y,i} \gets \frac{x^{(l)}_{y,i} + x^{(r)}_{y,i}}{2}
        \]
        \State Append $\tilde{x}_{y,i}$ to midpoint list
    \EndFor
\EndFor

\State \textbf{Discrete accumulation:}
\For{each midpoint $\tilde{x}_{y,i}$ in the list}
    \State Increment histogram bin:
    \[
    g[\lfloor \tilde{x}_{y,i} \rfloor] \gets g[\lfloor \tilde{x}_{y,i} \rfloor] + 1
    \]
\EndFor

\State \textbf{Normalization:}
\[
f_0[x] \gets \frac{g[x]}{\sum_x g[x]}, \quad \sum_x f_0[x] = 1
\]

\State \textbf{Continuous representation (smoothing):}
\[
f_0(x) \approx \sum_{x_i} f_0[x_i] \, K_\epsilon(x - x_i),
\]
where $K_\epsilon(x)$ is a normalized kernel (e.g., Gaussian) with
\(\int K_\epsilon(x)\,dx = 1\), ensuring
\(\int f_0(x)\,dx = 1\).

\State Estimate initial density $f_0(x)$ from frequency data
\For{$n = 0$ to $N-1$}
    \State $f_n^\theta(x) \gets f_n(x)\mathbf{1}[f_n(x) \ge \theta_n]$
    \State $f_{n+1}(x) \gets f_n^\theta * \mathcal{N}(0,\sigma_n^2)$
    \State Normalize $f_{n+1}$
\EndFor

\State Identify local maxima in $f_N(x)$:
\[
\{x_k\} = \{ x \mid f_N(x-1) < f_N(x) > f_N(x+1) \}
\]
\State \textbf{Output :  $\{x_k\}$}
\end{algorithmic}
\end{algorithm}

\subsection{Parameter Selection}

\begin{itemize}
    \item \textbf{Threshold \(\theta\)}:  
    \(\theta \approx (2 \sim 4) \times \hat{\sigma}_\varepsilon\), where \(\hat{\sigma}_\varepsilon\) is an estimate of the noise level. Alternatively, remove the bottom 5–15\% of amplitudes.
    \item \textbf{Gaussian smoothing \(\sigma\)}:  
    Choose \(\sigma\) smaller than the typical Gaussian component width. Multi-scale smoothing (\(\sigma_n\) increasing per iteration) can improve convergence.
    \item \textbf{Number of iterations \(n\)}:  
    Typically 2–5 iterations is sufficient. Fewer leaves noise; more over-smooths and merges peaks.
\end{itemize}

\section{Theoretical Analysis}

\subsection{Effect of Gaussian Convolution}

\begin{proposition}[Smoothing as Low-Pass Filtering]
Let $f_0 \in L^1(\mathbb{R})$ have finite second moment.  
Convolution of $f_0$ with a Gaussian kernel $\mathcal{G}_\sigma$,
\begin{equation}
f_1 = f_0 * \mathcal{G}_\sigma,
\end{equation}
produces a smoother distribution with the same mean 
\(\mu = \int x f_0(x)\,dx\) and variance
\begin{equation}
\text{Var}(f_1) = \text{Var}(f_0) + \sigma^2.
\end{equation}
\end{proposition}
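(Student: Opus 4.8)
The plan is to verify the two quantitative claims—preservation of the mean and additivity of the variance—by direct computation of the first two moments, treating $f_0$ as a probability density (so $\int f_0(x)\,dx = 1$) and using the two defining properties of the kernel: $\mathcal{G}_\sigma$ has mean zero and variance $\sigma^2$. As a preliminary, I would note that since $f_0 \in L^1(\mathbb{R})$ and $\mathcal{G}_\sigma$ is a smooth, rapidly decaying probability kernel, the convolution $f_1 = f_0 * \mathcal{G}_\sigma$ is again an $L^1$ probability density with unit mass; the qualitative ``smoother'' assertion follows because every derivative of $f_1$ may be transferred onto the infinitely differentiable factor $\mathcal{G}_\sigma$, so $f_1 \in C^\infty(\mathbb{R})$.

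For the mean, I would write out the convolution and apply Fubini's theorem to interchange the order of integration,
\begin{equation}
\int x\, f_1(x)\,dx = \int f_0(t)\left(\int x\,\mathcal{G}_\sigma(x-t)\,dx\right)dt.
\end{equation}
The inner integral is the mean of a Gaussian centred at $t$, hence equals $t$ because $\mathcal{G}_\sigma$ has mean zero; substituting back leaves $\int t\, f_0(t)\,dt = \mu$, which establishes mean preservation.

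For the variance, the cleanest route is to compute the second moment of $f_1$ after the change of variable $u = x - t$,
\begin{equation}
\int x^2\, f_1(x)\,dx = \iint (u+t)^2\, f_0(t)\,\mathcal{G}_\sigma(u)\,dt\,du.
\end{equation}
Expanding $(u+t)^2 = u^2 + 2ut + t^2$ and separating the double integral into products of single integrals, the cross term vanishes since $\int u\,\mathcal{G}_\sigma(u)\,du = 0$, the $u^2$ term contributes $\sigma^2$ (the kernel variance times the unit mass $\int f_0(t)\,dt = 1$), and the $t^2$ term reproduces $\int t^2 f_0(t)\,dt$. Hence $\int x^2 f_1 = \int x^2 f_0 + \sigma^2$, and subtracting $\mu^2$ from both sides—legitimate because $f_1$ also has mean $\mu$—yields $\mathrm{Var}(f_1) = \mathrm{Var}(f_0) + \sigma^2$.

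The main obstacle is purely technical: justifying the interchange of integration in both moment computations and the subsequent splitting of the double integral into products. This requires absolute convergence of the integrands, which is guaranteed because $f_0$ has finite second moment by hypothesis while the Gaussian possesses finite moments of every order; a single application of Tonelli's theorem to the nonnegative integrands $|x|^k f_0(t)\mathcal{G}_\sigma(x-t)$ for $k = 1,2$ covers every interchange at once. With that in place the remainder is elementary algebra.
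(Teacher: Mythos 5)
Your proof is correct, but it takes a different route from the paper's. The paper argues entirely in the Fourier domain: it writes $\widehat{f_1}(\omega) = \hat{f}_0(\omega)\exp(-\tfrac{1}{2}\sigma^2\omega^2)$ and observes that the Gaussian factor attenuates high frequencies while preserving low ones, which justifies the ``low-pass filter / smoother'' reading that gives the proposition its name --- but it never explicitly derives the two quantitative claims about the mean and the variance (these could be extracted from the first and second derivatives of the characteristic function at $\omega = 0$, but the paper does not do so). You instead compute the first two moments directly in real space via Fubini--Tonelli and a change of variables, which rigorously establishes exactly the formulas $\int x f_1 = \mu$ and $\mathrm{Var}(f_1) = \mathrm{Var}(f_0) + \sigma^2$ that the proposition actually asserts, and you cover the qualitative smoothness claim separately by transferring derivatives onto the $C^\infty$ kernel. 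In short, your argument is more elementary and more faithful to the stated equations; the paper's argument is shorter and makes the frequency-domain interpretation explicit but leaves the moment identities implicit. Your handling of the interchange of integration (Tonelli on the nonnegative integrands $|x|^k f_0(t)\mathcal{G}_\sigma(x-t)$, using the finite-second-moment hypothesis) is exactly the technical justification the statement's hypotheses are there to supply.
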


\begin{proof}
In the Fourier domain, convolution corresponds to multiplication:
\[
\widehat{f_1}(\omega) = \hat{f}_0(\omega) \, \exp\Big(-\frac{1}{2}\sigma^2 \omega^2\Big).
\]

The factor $\exp(-\frac{1}{2}\sigma^2 \omega^2)$ attenuates high-frequency components (large $\omega$).  
Low-frequency components (small $\omega$) are largely preserved.  
Therefore, Gaussian convolution acts as a low-pass filter, smoothing the distribution while preserving its main structure.
\end{proof}

\subsection{Preservation and Variance Increase of Gaussian Mixtures}

\begin{theorem}[Closure under Gaussian Convolution]
\label{thm:mixture}
Let
\[
f(x) = \sum_{k=1}^K \pi_k \, \mathcal{N}(\mu_k,\sigma_k^2)
\]
be a finite Gaussian mixture.  
Convolution with a Gaussian kernel $\mathcal{G}_\sigma$ yields
\[
f * \mathcal{G}_\sigma = \sum_{k=1}^K \pi_k \, \mathcal{N}(\mu_k,\sigma_k^2 + \sigma^2).
\]
\end{theorem}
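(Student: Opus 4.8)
The plan is to reduce the mixture case to the single-component case by exploiting the linearity of convolution, and then to establish the elementary fact that convolving one Gaussian with another yields a Gaussian whose variance is the sum. First I would write $f * \mathcal{G}_\sigma = \bigl(\sum_{k=1}^K \pi_k\,\mathcal{N}(\mu_k,\sigma_k^2)\bigr) * \mathcal{G}_\sigma$ and pull the finite sum and the constants $\pi_k$ outside the convolution, which is legitimate because convolution is a linear operator and the sum is finite. This leaves $\sum_{k=1}^K \pi_k\bigl(\mathcal{N}(\mu_k,\sigma_k^2) * \mathcal{G}_\sigma\bigr)$, so it suffices to handle a single Gaussian component.

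Second, for the single-component identity I would argue in the Fourier domain, consistent with the preceding Proposition on low-pass filtering. The characteristic function of $\mathcal{N}(\mu_k,\sigma_k^2)$ is $\exp\bigl(i\mu_k\omega - \tfrac{1}{2}\sigma_k^2\omega^2\bigr)$, and that of the zero-mean kernel $\mathcal{G}_\sigma$ is $\exp\bigl(-\tfrac{1}{2}\sigma^2\omega^2\bigr)$. Since convolution corresponds to multiplication of transforms, the product is $\exp\bigl(i\mu_k\omega - \tfrac{1}{2}(\sigma_k^2+\sigma^2)\omega^2\bigr)$, which is precisely the characteristic function of $\mathcal{N}(\mu_k,\sigma_k^2+\sigma^2)$; inverting the transform recovers the claimed component. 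Alternatively I could verify the same identity in the spatial domain by completing the square in the convolution integral, but I would not grind through that routine computation here.

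Third, I would confirm the bookkeeping: the means $\mu_k$ and the mixture weights $\pi_k$ are unchanged by the convolution, and the constraint $\sum_k \pi_k = 1$ is preserved because each convolved component integrates to one (a Gaussian is a normalized density and $\mathcal{G}_\sigma$ has unit mass). Reassembling the sum then yields $f * \mathcal{G}_\sigma = \sum_{k=1}^K \pi_k\,\mathcal{N}(\mu_k,\sigma_k^2+\sigma^2)$, as stated. I expect no serious obstacle, since the theorem is a direct consequence of linearity together with the Gaussian convolution identity; the only point requiring genuine care is the single-component variance-addition step, which the preceding Proposition already anticipates through its formula $\mathrm{Var}(f_1) = \mathrm{Var}(f_0) + \sigma^2$.
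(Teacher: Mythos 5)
Your proposal is correct and follows essentially the same route as the paper's proof: linearity of convolution to distribute over the finite mixture, plus the single-Gaussian identity that convolving $\mathcal{N}(\mu_k,\sigma_k^2)$ with the zero-mean kernel $\mathcal{G}_\sigma$ gives $\mathcal{N}(\mu_k,\sigma_k^2+\sigma^2)$. The only difference is that the paper asserts the single-component identity without justification, whereas you supply the characteristic-function verification, which makes your version more complete.
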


\begin{proof}
Convolution of a Gaussian with another Gaussian produces a Gaussian with mean preserved and variance increased by the kernel variance.  
By linearity, the mixture weights \(\pi_k\) are unchanged, so the mixture structure is preserved.
\end{proof}
Thus, Gaussian convolution polishes each component of the mixture by increasing its variance, smoothing the overall distribution while maintaining the original peak locations.

\subsection{Role of Thresholding in Mode Selection}

\begin{theorem}[Nonlinear Mode Suppression]
\label{thm:threshold}
The operator $\mathcal{T}_\theta$ eliminates low-amplitude components and reduces higher-order cumulants of $f(x)$.
\end{theorem}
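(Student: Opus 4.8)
The plan is to treat the two assertions in the statement separately, since the first is essentially definitional while the second carries the real content. For the first claim, I would read directly off the definition $(\mathcal{T}_\theta f)(x) = f(x)\,\mathbf{1}\{f(x)\ge\theta\}$ that every location with $f(x) < \theta$ is mapped to zero. In the mixture-plus-noise model $f_0 = \sum_k \pi_k \mathcal{N}(\mu_k,\sigma_k^2) + \varepsilon$, the sub-threshold set is precisely where the signal is governed by the noise floor and the decaying Gaussian tails; hence choosing $\theta$ above the noise amplitude, as in the parameter rule $\theta \approx (2 \sim 4)\,\hat{\sigma}_\varepsilon$, removes the low-amplitude components while leaving the high-amplitude peak caps intact. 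This part needs no computation beyond unwinding the indicator.

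For the second claim I would express the cumulants $\kappa_n$ through the central moments $m_n = \int (x-\mu)^n f(x)\,dx$ via the standard polynomial moment-to-cumulant relations, and then analyze how $\mathcal{T}_\theta$ acts on the $m_n$ for $n \ge 3$. Writing $f_0 = \mathcal{T}_\theta f_0 + r$ with $r = f_0 - \mathcal{T}_\theta f_0 \ge 0$ the removed sub-threshold mass — the noise $\varepsilon$ together with the Gaussian tails and inter-peak valleys — the guiding observation is that $r$, although small in amplitude, lives far from the component centers, while the weight $(x-\mu)^n$ grows rapidly in $|x-\mu|$. Consequently $r$ contributes disproportionately to the high-order moments, and its removal sharply decreases $\int (x-\mu)^n r(x)\,dx$. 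A complementary and cleaner statement is that $\mathcal{T}_\theta f_0$ is effectively compactly supported on the union of intervals where the clean mixture exceeds $\theta$, so its higher moments obey a uniform bound $|m_n| \le R^n$, whereas the Gaussian tails and additive noise make the higher moments of $f_0$ grow strictly faster; the corresponding $\kappa_n$ are then likewise controlled.

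I expect the main obstacle to be two coupled nonlinearities that block a one-line monotonicity argument. First, each iterate is renormalized to unit mass, so thresholding is not a simple moment subtraction: dividing by the surviving mass $Z = \int \mathcal{T}_\theta f_0 < 1$ rescales $m_n \mapsto m_n/Z$ and also shifts the mean about which the central moments are taken, so any claim of the form $|\kappa_n^{\text{new}}| \le |\kappa_n^{\text{old}}|$ must carry these rescalings through. Second, the moment-to-cumulant map is polynomial, so a decrease in each $|m_n|$ does not mechanically yield a decrease in each $|\kappa_n|$ — and indeed truncating a pure Gaussian introduces a small negative fourth cumulant, showing the effect is not uniformly monotone. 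The honest resolution is to phrase the result to leading order in the noise amplitude $\|\varepsilon\|$: treat $r$ as a small perturbation dominated by $\varepsilon$, linearize the cumulants about the clean mixture, and show that the first-order change removes exactly the noise-induced inflation of $\kappa_n$ for $n \ge 3$. This converts the qualitative phrase ``reduces higher-order cumulants'' into a precise leading-order statement while sidestepping the intractable closed-form comparison for a general multi-peak mixture.
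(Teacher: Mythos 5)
Your proposal contains the paper's entire argument as a proper subset and then goes well beyond it. The paper's proof is a two-sentence heuristic: thresholding ``removes small-amplitude regions contributing disproportionately to kurtosis and skewness,'' and the subsequent convolution ``redistributes mass smoothly.'' That is exactly the guiding observation in your second paragraph --- sub-threshold mass $r$ lives far from the component centers where the weight $(x-\mu)^n$ is large, so deleting it shrinks the high-order central moments --- and your first paragraph's unwinding of the indicator matches the paper's implicit reading of ``eliminates low-amplitude components.'' Where you genuinely diverge is in the third paragraph: the paper never acknowledges that renormalization by the surviving mass $Z<1$ rescales and recenters the moments, never addresses the fact that the polynomial moment-to-cumulant map breaks any termwise monotonicity, and never notices that truncating a pure Gaussian itself introduces a nonzero fourth cumulant --- an observation that, taken literally, falsifies the theorem as stated in full generality. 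Your proposed repair (restate the claim to leading order in $\|\varepsilon\|$, linearize the cumulants about the clean mixture, and show the first-order correction removes precisely the noise-induced inflation of $\kappa_n$ for $n\ge 3$) is a sound and honest way to turn the heuristic into a theorem, and it buys a precise, provable statement at the cost of weakening the conclusion to a perturbative regime. The one caveat is that you have proposed a program rather than executed it: the linearization of $\kappa_n$ in the perturbation $r$ and the verification that the sign of the first-order term is the claimed one still need to be carried out, and for a multi-peak mixture the sub-threshold set includes inter-peak valleys of the \emph{clean} signal, not only noise, so $r$ is not dominated by $\varepsilon$ alone --- your leading-order expansion would need to separate those two contributions. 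Still, as a critique and a roadmap your proposal is strictly more rigorous than what the paper offers.
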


\begin{proof}
Thresholding introduces a nonlinear projection that removes small-amplitude regions contributing disproportionately to kurtosis and skewness. Subsequent convolution redistributes mass smoothly, yielding distributions with reduced higher-order moments.
\end{proof}

This explains why repeated threshold–convolution cycles stabilize dominant Gaussian-shaped spectral modes.

\section{Experiments and Results}

\subsection{Experimental Setup}

This section describes how the proposed post-processing method was implemented and evaluated. The focus of the experiments is on extracting reliable column coordinates from predicted segmentation masks and assessing their effectiveness in preserving tabular content.

\subsubsection{Dataset and Mask Generation}

Experiments were conducted using the PubTable-1M dataset~\cite{smock2022pubtables}. XML annotations provided ground-truth cell coordinates, which were used to generate binary column segmentation masks. All images and masks were resized to $1024 \times 1024$, and pixel values were normalized to the range $[0,1]$.

\subsubsection{Column Mask Prediction}

Column masks were generated using a TableNet-based architecture ~\cite{paliwal2019tablenet}  with VGG-19  as backbone trained on full table images without cropping. The model predicts column masks as part of its multi-task learning framework using binary cross-entropy loss. Since the contribution of this work lies in post-processing rather than segmentation model design, training-related details are not further discussed.

\subsubsection{Post-Processing and Boundary Extraction}

The proposed post-processing method was applied exclusively to the predicted column masks. For each mask, pixel-wise transitions were aggregated along the horizontal axis to form a one-dimensional density signal. Column boundary candidates were extracted using an iterative threshold–convolution procedure.

Two iterations were applied sequentially. At each iteration, the signal was smoothed using a Gaussian kernel and thresholded relative to its current distribution. Local maxima of the final smoothed signal were selected as column boundary coordinates and rescaled to the original image dimensions.

\subsubsection{Row Differentiation via OCR}

Row boundaries were not derived from segmentation masks. Instead, extracted column regions were processed using PyTesseract~\cite{saoji2021text}. Vertical spacing between recognized text lines was analyzed to infer row transitions. This enables implicit row separation based on textual layout without introducing an additional row segmentation stage.

\subsubsection{Hyperparameter Tuning}

Hyperparameters for the iterative threshold–convolution procedure were manually tuned through empirical evaluation:

\begin{itemize}
    \item \textbf{Number of iterations:} Two sequential iterations applied.
    \item \textbf{Thresholding:} First iteration $1.5 \times \sigma$, second iteration $1.0 \times \sigma$, where $\sigma$ is the standard deviation of the current signal.
    \item \textbf{Gaussian smoothing parameters:}  First iteration: $(\sigma=5, \mu=0)$, second iteration: $(\sigma=7, \mu=0)$
\end{itemize}

\subsection{Results}

The proposed method was evaluated using the \textbf{Content-Aware Segmentation Accuracy (CASA)} metric introduced earlier. CASA measures the proportion of correctly identified words after segmentation and OCR, reflecting how well the extracted structural boundaries preserve tabular content.

{\footnotesize
\begin{equation}
\text{CASA (\%)} =
\frac{\text{Number of correctly identified words in segmented regions}}
     {\text{Total number of words in ground-truth cells}}
\times 100
\end{equation}
}

For comparison, applying OCR directly on the raw column mask predictions without post-processing yielded a CASA of \textbf{67\%}, which increased to \textbf{76\%} CASA after applying our method, demonstrating the effectiveness of the iterative smoothing and thresholding in recovering accurate table structures.

\begin{figure}[!t]
\centering
\includegraphics[width=0.45\textwidth]{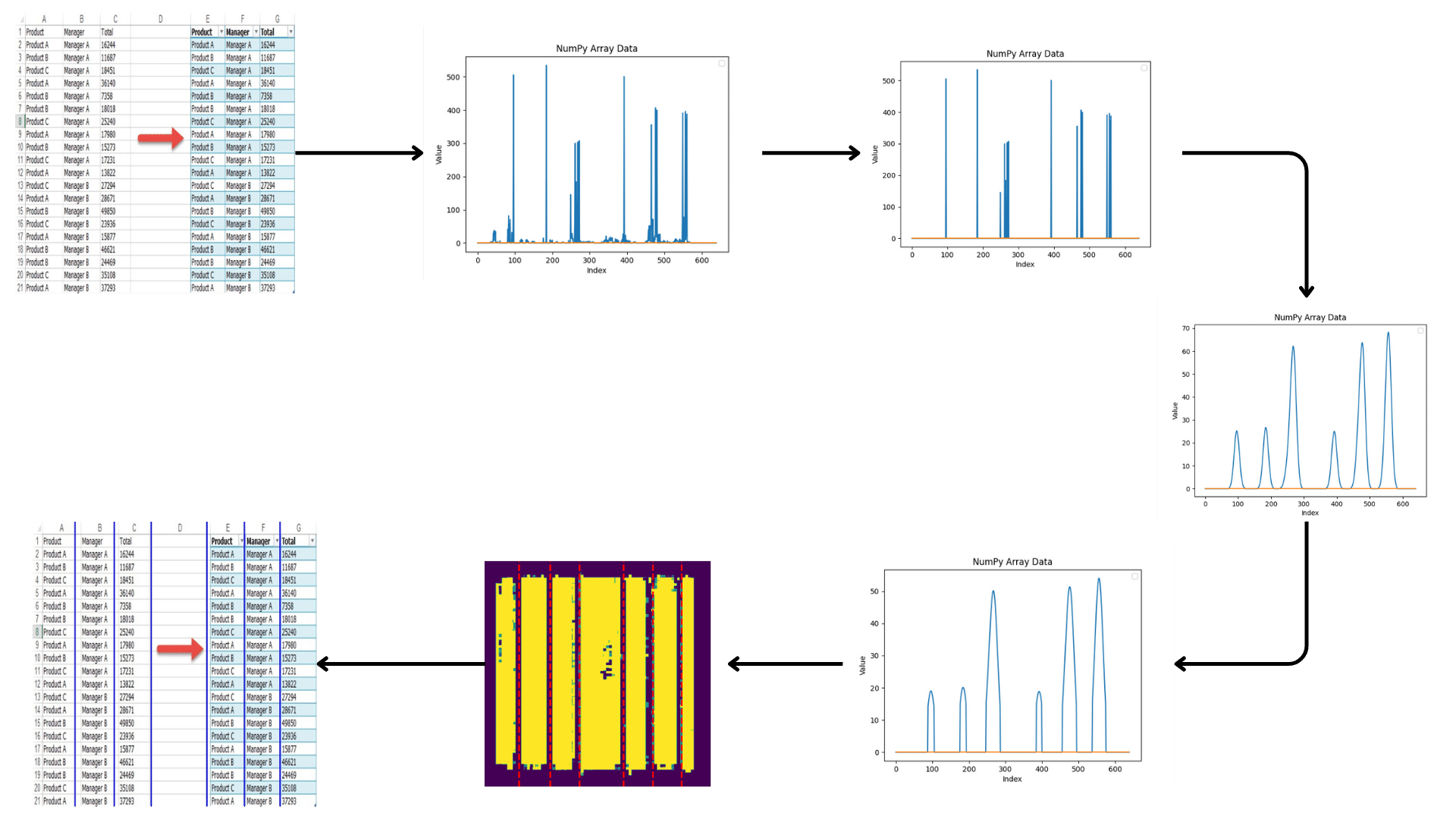}
\caption{Evolution of the density $f_n(x)$ during iterative threshold–convolution. Dominant peaks corresponding to column boundaries become increasingly pronounced across iterations.}
\label{fig:density_example}
\end{figure}

These results demonstrate that explicit column coordinates can be reliably extracted directly from noisy segmentation masks without modifying the original image. The proposed approach preserves textual content and is independent of the underlying segmentation network, making it applicable to a wide range of mask-based table structure recognition models.

\section{Limitations}

While the proposed post-processing framework effectively extracts column and row coordinates from mask-based table segmentation outputs, it is limited to segment-wise table structures. The method assumes that table layouts are relatively regular and that the mask captures contiguous rows and columns. In scenarios involving complex or highly nested tables, irregular spanning cells, or multi-level hierarchies, the accumulation-based approach may fail to correctly identify structural boundaries. Additionally, the method relies on the quality of the predicted masks; severely fragmented or low-confidence masks can reduce the accuracy of extracted coordinates. Future extensions may address these challenges by integrating hierarchical modeling or graph-based representations for more complex table structures.
\section{Conclusion}

This work presents a post-processing framework for extracting structural coordinates from mask-based table segmentation outputs. By aggregating midpoints from predicted masks and applying iterative thresholding combined with Gaussian smoothing, the method produces well-separated peaks corresponding to table columns and rows. This approach accurately identifies structural boundaries even in noisy or fragmented mask predictions, while preserving the original image for subsequent OCR.

Experiments with TableNet-generated masks demonstrate that our method enhances structural extraction accuracy without degrading visual content. The framework is generalizable and can be applied to other mask-based table recognition models, offering a practical enhancement to existing table detection pipelines. Future work will explore adaptive thresholding and multi-scale smoothing to further improve robustness for low-resolution or highly noisy tables.

\bibliographystyle{ieeetr} 
\bibliography{references}

\end{document}